\documentclass[letterpaper, 10 pt, conference]{ieeeconf}  

\IEEEoverridecommandlockouts                              

\usepackage{graphics} 
\usepackage{subcaption}
\usepackage{epsfig} 

\usepackage{amsthm}
\usepackage{amsmath} 
\usepackage{amssymb}  
\usepackage{algorithm}
\usepackage{algorithmic}
\usepackage{booktabs}
\usepackage{multirow}
\usepackage{graphicx}
\usepackage{cite}
\usepackage{hyperref}
\usepackage{censor}

\theoremstyle{definition}
\newtheorem{theorem}{Theorem}

\theoremstyle{definition}
\newtheorem{proposition}{Proposition}

\newtheorem*{remark}{Remark}

\title{\LARGE \bf
BridgeFlow: Fast and Robust SE(2)-Equivariant Motion \\ Planning with Flow Matching
}

\author{Xinzhe Zhou$^{\dagger}$, Xuyang Wang$^{\dagger}$, Xiaoming Duan$^{\dagger}$, and Jianping He$^{\dagger}$%
\thanks{This work was supported in part by the National Natural Science Foundation of China under Grants 625B1023, 62373247, and 62303314, and in part by the State Key Laboratory of Autonomous Intelligent Unmanned Systems under Grant ZZKF2025-1-19.}
\thanks{$^{\dagger}$All authors are with the School of Automation and Intelligent Sensing, Shanghai Jiao Tong University, Shanghai, China. E-mail: \{zhou\_xinzhe, 2674789652, xduan, jphe\}@sjtu.edu.cn.}}%

\begin{document}

\maketitle
\thispagestyle{empty}
\pagestyle{empty}

\begin{abstract}
In robotic motion planning, equivariance to rigid body transformations is crucial for robust spatial generalization. However, current learning-based planners face a critical dilemma: they either lack inherent equivariance, treating transformed tasks as novel scenarios, or enforce it via computationally expensive specialized architectures that bottleneck real-time inference. To break this trade-off, we propose BridgeFlow, a fast and strictly $\mathrm{SE(2)}$-equivariant generative motion planning framework. Rather than relying on heavy equivariant networks, BridgeFlow achieves exact spatial equivariance via a lightweight task-centric canonicalization module, enabling generalization using standard architectures. To further accelerate inference, we pair a Brownian bridge informative prior with context-aware mini-batch optimal transport. This constructs a straightened vector field that minimizes transport costs and stabilizes training. Furthermore, environmental awareness is explicitly embedded via Classifier-Free Guidance. Evaluations in dense 2D environments and on a 7-DoF Franka manipulator demonstrate that BridgeFlow achieves up to a $15\times$ inference speedup and a $2\times$ higher valid trajectory rate over state-of-the-art diffusion baselines, alongside robust generalization to entirely unseen environments and arbitrary spatial transformations.
\end{abstract}


\section{Introduction}

Motion planning is a fundamental problem in robotics, requiring the generation of collision-free, kinematically feasible trajectories from a start to a goal configuration in complex environments. For motion planning, equivariance to rigid body transformations is a fundamental cornerstone of generalization and robustness. It ensures that algorithms can reliably solve structurally identical tasks regardless of their global pose. While traditional sampling-based \cite{rrt, rrt-connect} and optimization-based \cite{chomp, stomp, trajopt, gpmp} algorithms provide theoretical guarantees and completeness, their computational bottlenecks severely limit real-time applicability. Consequently, learning-based approaches have emerged as a highly promising alternative, though effectively integrating strict equivariance into these models remains a formidable challenge.

Current learning-based methods broadly fall into two main paradigms: sequential state prediction and joint trajectory generation. Sequential state prediction methods \cite{mpnet, mpnet2, mpnet3, wyh} iteratively predict future states. However, they not only suffer from compounding errors and slow sequential inference, but also fundamentally lack any consideration for spatial equivariance. Conversely, joint trajectory generation models, prominently diffusion-based frameworks \cite{ddpm, ddim, mpd, mpd2, edmp, diffusion-policy}, excel in feasible and multimodal trajectory generation but also face a geometric dilemma. Some variants\cite{eq-diffusion-policy, equibot, et-seed} achieve equivariance by employing specialized, mathematically heavy neural architectures, which critically bottleneck their real-time inference capabilities. Other methods \cite{mpd, mpd2} ignore geometric symmetries entirely, treating spatially translated or rotated tasks as entirely novel scenarios. This failure to capture underlying geometric symmetries results in brittle performance and a severe lack of spatial robustness.

Beyond the equivariance dilemma, generative models like diffusion also suffer from slow inference due to iterative denoising. Flow Matching (FM) \cite{fm, ot} addresses this speed bottleneck by constructing a smooth probability flow from initial noise to the target data distribution, enabling highly efficient few-step inference. However, existing FM-based planners \cite{flowmp, actionflow, rieflow} still suffer from three critical limitations. First, these FM methods still lack inherent spatial equivariance. Second, they naively employ standard Gaussian noise as the prior distribution \cite{fm}. As demonstrated by recent advances leveraging Gaussian Process \cite{hie-mpd} or task-informed priors \cite{data-dependent-coupling}, structured noise significantly enhances generative performance. Third, current generative planners \cite{mpd, mpd2, flowmp} ignore environmental geometries during training. Thus, their inherent capability to generate collision-free paths is restricted, almost entirely relying on computationally expensive cost-function gradient guidance during inference. 

To break this critical trade-off between geometric robustness and inference speed, we propose \textbf{BridgeFlow}, an environment-aware and strictly $\mathrm{SE(2)}$-equivariant Flow Matching framework for fast, robust motion planning. Our core contributions are summarized as follows:

\begin{itemize}
    \item We introduce a Brownian bridge informative prior paired with context-aware mini-batch optimal transport. By constructing the initial distribution as a noise-injected linear interpolation between endpoints and strictly confining OT couplings within identical task contexts, this synergy significantly straightens the flow field and drastically reduces transport costs.
    \item We leverage Classifier-Free Guidance \cite{cfg, zero-cfg, mpd-with-cfg} to explicitly embed environmental awareness. By conditioning the FM training on occupancy maps, we distill spatial constraints into the learned prior, eliminating the reliance on expensive inference-time optimization.
    \item We achieve strict $\mathrm{SE(2)}$-equivariance via a lightweight task-centric canonicalization module. This dynamically transforms inputs into a standardized frame and decanonicalizes the output vector field, achieving robust spatial generalization across continuous rigid body transformations with standard network architectures.
\end{itemize}

\section{Related Work}
\label{Sec.2}

\subsection{Generative Models for Motion Planning}
Learning-based motion planning broadly falls into two main paradigms: sequential state prediction and joint trajectory generation. Sequential models \cite{mpnet, mpnet2, mpnet3} formulate planning as an iterative decision-making process, which often suffers from compounding errors and slow sequential inference. Consequently, joint trajectory generation methods have gained prominence by directly modeling the feasible trajectory distribution. While foundational diffusion frameworks \cite{mpd, mpd2, edmp} capture robust multimodal priors, their iterative denoising fundamentally bottlenecks inference speed. To overcome this computational barrier, recent advances \cite{flowmp, actionflow, rieflow} have introduced Flow Matching (FM) \cite{fm, ot, curse-ot} to generative planning. FM constructs a smooth probability path between a tractable prior and the target data distribution, enabling highly efficient few-step inference via ordinary differential equation solvers. For instance, FlowMP \cite{flowmp} replaces the diffusion backbone of MPD \cite{mpd, mpd2} with FM to drastically accelerate trajectory sampling.

However, the naive application of FM to structured motion planning remains sub-optimal. First, standard FM often relies on task-agnostic Gaussian priors, which induces highly curved, complex vector fields, leading to inefficient transport. Second, existing generative planners handle environmental awareness sub-optimally. They either condition on heavy, high-dimensional sensory data\cite{scene-diffuser}, incurring severe computational overhead, or lack explicit environmental conditioning entirely \cite{mpd, mpd2, flowmp}. By ignoring environmental geometries during training, these foundational frameworks are forced to rely heavily on computationally expensive test-time cost-function gradients for obstacle avoidance. Addressing these intertwined inefficiencies through task-aligned priors, context-aware transport, and explicit environmental guidance forms the core motivation of our framework.

\subsection{Equivariant Learning}
In motion planning, solving structurally identical tasks that differ only in their global pose is frequent. To achieve this robust spatial generalization, generative planners must possess equivariance \cite{equivariance}, ensuring that the generated trajectory transforms deterministically alongside the input environment. Existing approaches to endow planners with equivariance typically rely on either data augmentation or specialized network architectures \cite{gecnn, se3-transformer, equivariant-learning, eq-diffusion-policy, equibot, et-seed}. 

Data augmentation requires massively inflated datasets and fails to guarantee strict mathematical equivariance. Conversely, specialized architectures entail complex custom operations, imposing significant computational overhead and complicating integration with standard generative pipelines like FM. To circumvent these bottlenecks, our framework proposes a lightweight task-centric canonicalization strategy. By pre-transforming inputs into a standardized coordinate frame, we achieve strict spatial generalization while maintaining full compatibility with standard neural architectures.

\section{Methodology}
\label{Sec.3}

In this section, we detail our fast and $\mathrm{SE(2)}$-equivariant generative motion planner, BridgeFlow. We begin by formally defining the generative motion planning objective (Sec. \ref{subsec:problem}). To address the transport inefficiencies of standard Flow Matching, we introduce a Brownian bridge informative prior (Sec. \ref{subsec:prior}) paired with context-aware mini-batch optimal transport (Sec. \ref{subsec:ot}). Next, we detail how environmental constraints are explicitly embedded into the generative process via Classifier-Free Guidance (Sec. \ref{subsec:cfg}). Finally, we present our lightweight task-centric canonicalization module (Sec. \ref{subsec:canonicalization}), which guarantees strict $\mathrm{SE(2)}$-equivariance across the entire framework.

\subsection{Problem Formulation}
\label{subsec:problem}



We formulate motion planning in a workspace $\mathbb{R}^d$ ($d \in \{2, 3\}$). A task is defined by the tuple $\mathcal{P} = (s, g, \mathcal{M})$, where $s, g \in \mathbb{R}^d$ are the start and goal positions, and $\mathcal{M}: \mathbb{R}^d \to \{0, 1\}$ is the environmental occupancy map. The objective is to learn a policy $\pi$ that generates a kinematically feasible, collision-free waypoint sequence $\mathbf{x} = [p_1, \dots, p_N] \in \mathbb{R}^{N \times d}$ conditioned on $\mathcal{P}$. 

To ensure spatial generalization, we formulate a strictly $\mathrm{SE(2)}$-equivariant motion planning problem. Consider a spatial rigid body transformation $T = (R, \mathbf{d}) \in SE(2)$, comprising a rotation $R \in SO(2)$ and a translation $\mathbf{d} \in \mathbb{R}^d$. The transformed task context is given by $\tilde{\mathcal{P}} = T(\mathcal{P}) = (Rs + \mathbf{d}, Rg + \mathbf{d}, T(\mathcal{M}))$. The generative policy $\pi$ must satisfy the exact equivariance constraint: 
$$ \pi(T(\mathcal{P})) = T(\pi(\mathcal{P})) $$

We realize this policy within a continuous-time Flow Matching framework, modeling trajectory generation as an ordinary differential equation over flow time $t \in [0, 1]$. Thus, the problem reduces to learning an $\mathrm{SE(2)}$-equivariant neural vector field $v_\theta(\mathbf{x}_t, t \mid \mathcal{P})$ that strictly satisfies the geometric pushforward rule:
$$ v_\theta(T(\mathbf{x}_t), t \mid T(\mathcal{P})) = R \cdot v_\theta(\mathbf{x}_t, t \mid \mathcal{P}) $$

This strict condition guarantees that the generative continuous dynamics remain geometrically consistent under any $\mathrm{SE(2)}$ transformation of the task.

\subsection{Brownian Bridge Informative Prior}
\label{subsec:prior}
The initial distribution $p_0$ is critical for the efficiency of Flow Matching. Standard FM typically employs a naive standard Gaussian distribution, which is entirely uninformative regarding the planning task, leading to complex vector fields and high transport costs. To address this, we propose a Brownian bridge informative prior that constructs $p_{\text{info}}$ based on the start and goal positions, aligning directly with robotic kinematic characteristics.

\begin{figure}[htbp]
    \centering
    \begin{subfigure}[b]{0.22\textwidth}
        \centering
        \includegraphics[width=\textwidth]{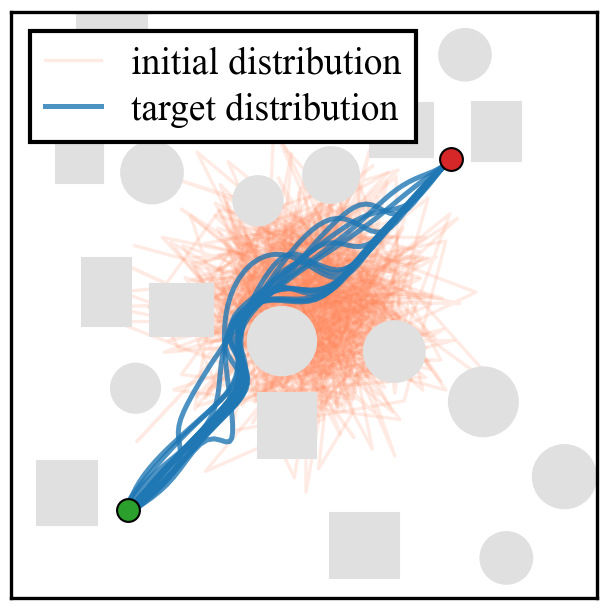}
        \caption{Gaussian Prior}
        \label{subfig:gaussian_prior}
    \end{subfigure}
    \hfill
    \begin{subfigure}[b]{0.22\textwidth}
        \centering
        \includegraphics[width=\textwidth]{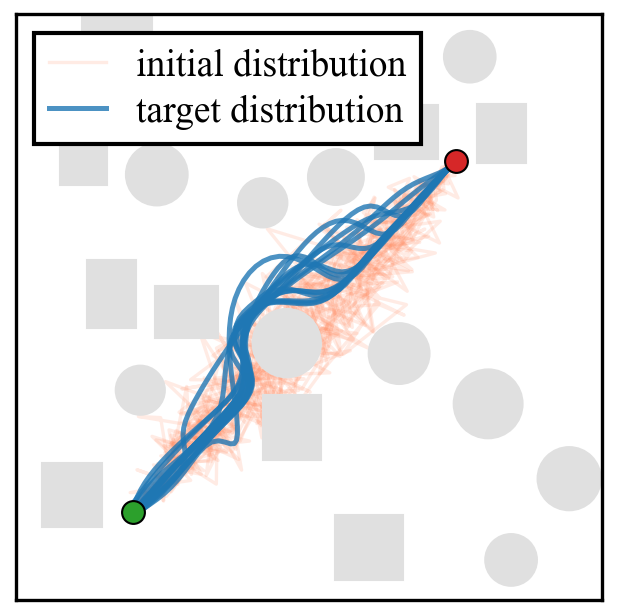}
        \caption{Brownian Bridge Prior}
        \label{subfig:bb_prior}
    \end{subfigure}
    \caption{Comparison of initial priors. The Brownian bridge anchors the endpoints, reducing required deformation.}
    \label{fig:priors}
\end{figure}

\subsubsection{Prior Design}
For a trajectory $\mathbf{x}_0$ consisting of $N$ waypoints at flow time $t=0$, let $\gamma_i = \frac{i-1}{N-1} \in [0, 1]$ be the normalized index of the $i$-th waypoint for $i \in\{1,\dots,N\}$. We construct a linear interpolation $\mu_{lin, i} = (1-\gamma_i)s + \gamma_i g$. The proposed prior $p_{\text{info}}$ is formulated point-wise as:
$$x_{0, i} = \mu_{lin, i} + \sigma_{\text{prior}} \cdot \sqrt{\gamma_i(1-\gamma_i)} \cdot \varepsilon, \quad \varepsilon \sim \mathcal{N}(0, I_d),$$
where $\sigma_{\text{prior}} \in \mathbb{R}^+$ scales the base noise. The \textit{bridge factor} $\sqrt{\gamma_i(1-\gamma_i)}$ analytically enforces the variance to strictly drop to zero at the start ($\gamma_1=0$) and goal ($\gamma_N=1$) while peaking in the middle. This mirrors the physical intuition of motion planning as fine trajectory adjustments bounded by fixed endpoints.

\subsubsection{Transport Cost Reduction}
To theoretically validate the advantage of $p_{\text{info}}$, we analyze the transport cost $\mathcal{C}(p_0, p_1)$, defined as the expected kinetic energy of the flow. Under the linear optimal transport probability path $\psi_t(\mathbf{x}) = (1-t)\mathbf{x}_0 + t\mathbf{x}_1$, the cost simplifies to the expected pairwise Euclidean distance: $\mathcal{C}(p_0, p_1) = \mathbb{E}_{\mathbf{x}_0, \mathbf{x}_1} [ \| \mathbf{x}_1 - \mathbf{x}_0 \|^2 $.



To quantify the cost reduction, we firstly decompose the expert trajectory of dimension $D = Nd$ as $\mathbf{x}_1 = \mu_{lin} + \delta$, where $\delta$ captures the spatial deformation for obstacle avoidance. For our Brownian bridge prior, we denote the step-wise noise scale as $\sigma_i = \sigma_{\text{prior}} \sqrt{\gamma_i(1-\gamma_i)}$, yielding an average variance $\bar{\sigma}^2 = \frac{1}{N}\sum_{i=1}^N \sigma_i^2$. We strictly bound the base noise such that $\bar{\sigma}^2 < 1$.

\begin{theorem}[Transport Cost Reduction]
    Given the trajectory decomposition $\mathbf{x}_1 = \mu_{lin} + \delta$, the transport cost of the informative prior ($\mathcal{C}_{\text{info}}$) is bounded below that of the standard Gaussian prior ($\mathcal{C}_{\text{std}}$). The exact reduction margin is:
    $$\mathcal{C}_{\text{std}} - \mathcal{C}_{\text{info}} = \| \mu_{lin} \|^2 + 2\langle \mu_{lin}, \mathbb{E}[\delta] \rangle + (1 - \bar{\sigma}^2)D.$$
\end{theorem}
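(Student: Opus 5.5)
The plan is to compute both transport costs in closed form under the independent coupling. In that coupling $\mathbf{x}_0 \sim p_0$ is drawn independently of the expert sample $\mathbf{x}_1$, which is the setting in which $\mathcal{C}(p_0,p_1)=\mathbb{E}\|\mathbf{x}_1-\mathbf{x}_0\|^2$ is stated. Subtracting the two expressions should give the claimed margin. Throughout I will condition on the task, so $s$, $g$, and hence $\mu_{lin}\in\mathbb{R}^{D}$ are deterministic. All randomness then sits in the deformation $\delta$ and in the prior noise. I only need $\mathbb{E}[\delta]$ and $\mathbb{E}\|\delta\|^2$ to be finite.

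\textbf{Standard Gaussian prior.} Here $\mathbf{x}_0=\varepsilon\sim\mathcal{N}(0,I_D)$, independent of $\mathbf{x}_1$. Expanding the square and using $\mathbb{E}[\varepsilon]=0$ together with independence, the cross term vanishes:
$$\mathcal{C}_{\text{std}}=\mathbb{E}\|\mathbf{x}_1\|^2+\mathbb{E}\|\varepsilon\|^2=\mathbb{E}\|\mu_{lin}+\delta\|^2+D.$$
Expanding once more gives
$$\mathcal{C}_{\text{std}}=\|\mu_{lin}\|^2+2\langle\mu_{lin},\mathbb{E}[\delta]\rangle+\mathbb{E}\|\delta\|^2+D.$$

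\textbf{Brownian bridge prior.} Write $\mathbf{x}_0=\mu_{lin}+\eta$, where the $i$-th block of $\eta$ is $\sigma_i\varepsilon_i$ with $\varepsilon_i\sim\mathcal{N}(0,I_d)$. Then $\eta$ is zero-mean, independent of $\delta$, and has covariance $\mathrm{diag}(\sigma_i^2 I_d)$. The linear part cancels exactly, leaving $\mathbf{x}_1-\mathbf{x}_0=\delta-\eta$. This cancellation is the key structural point: the prior shares its mean with the decomposition of $\mathbf{x}_1$. Since the cross term again vanishes,
$$\mathcal{C}_{\text{info}}=\mathbb{E}\|\delta\|^2+\mathbb{E}\|\eta\|^2=\mathbb{E}\|\delta\|^2+d\sum_{i=1}^N\sigma_i^2=\mathbb{E}\|\delta\|^2+\bar{\sigma}^2 D,$$
using $\sum_i\sigma_i^2=N\bar{\sigma}^2$ and $D=Nd$.

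\textbf{Subtracting.} The $\mathbb{E}\|\delta\|^2$ terms cancel, which yields precisely
$$\mathcal{C}_{\text{std}}-\mathcal{C}_{\text{info}}=\|\mu_{lin}\|^2+2\langle\mu_{lin},\mathbb{E}[\delta]\rangle+(1-\bar{\sigma}^2)D.$$

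\textbf{Main obstacle: the sign of the margin.} The computation itself is routine. The difficulty is the qualitative claim that $\mathcal{C}_{\text{info}}$ lies strictly below $\mathcal{C}_{\text{std}}$. The term $(1-\bar{\sigma}^2)D$ is positive by the standing assumption $\bar{\sigma}^2<1$. The cross term, however, could in principle be negative. To handle this, I would rewrite
$$\|\mu_{lin}\|^2+2\langle\mu_{lin},\mathbb{E}[\delta]\rangle=\|\mu_{lin}+\mathbb{E}[\delta]\|^2-\|\mathbb{E}[\delta]\|^2=\|\mathbb{E}[\mathbf{x}_1]\|^2-\|\mathbb{E}[\delta]\|^2.$$
So the margin is positive whenever $\|\mathbb{E}[\delta]\|^2<\|\mathbb{E}[\mathbf{x}_1]\|^2+(1-\bar{\sigma}^2)D$. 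This holds in the intended regime, where the mean deformation $\mathbb{E}[\delta]$ is small compared with the straight-line displacement and the trajectory dimension. The same condition also follows if one can show $\langle\mu_{lin},\mathbb{E}[\delta]\rangle\ge 0$, for instance because deformations for obstacle avoidance are on average not directed back toward the origin.

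I would state this condition explicitly as the precise sense in which the cost is "bounded below". The exact identity itself requires no assumption beyond independence and finite second moments.
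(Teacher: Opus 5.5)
Your derivation of the exact margin is the same as the paper's: independent coupling, vanishing cross terms, cancellation of $\mu_{lin}$ under the bridge prior, and $\mathbb{E}\|\eta\|^2=\bar{\sigma}^2 D$. For the strict inequality, the paper simply appeals to symmetrically distributed obstacles to set $\mathbb{E}[\delta]\to\mathbf{0}$; this is a special case of your completed-square condition $\|\mathbb{E}[\delta]\|^2<\|\mathbb{E}[\mathbf{x}_1]\|^2+(1-\bar{\sigma}^2)D$, so your explicit treatment of the sign is, if anything, more careful than the paper's.
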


\begin{proof}
For the standard prior $p_{\text{std}}$, tracking the expected distance with an independent $\mathbf{x}_0 \sim \mathcal{N}(0, I_D)$ yields:
\begin{align*}
    \mathcal{C}_{\text{std}} &= \mathbb{E} \left[ \| \mathbf{x}_1 \|^2 - 2\langle \mathbf{x}_1, \mathbf{x}_0 \rangle + \| \mathbf{x}_0 \|^2 \right]\\ 
    &= \| \mu_{lin} \|^2 + \mathbb{E}[\|\delta\|^2] + 2\langle \mu_{lin}, \bar{\delta} \rangle + D.
\end{align*}

For the informative prior $p_{\text{info}}$, substituting $\mathbf{x}_0 = \mu_{lin} + \epsilon$ (where point-wise $\epsilon_i \sim \mathcal{N}(0, \sigma_i^2 I_d)$) directly cancels $\mu_{lin}$:
\begin{align*}
    \mathcal{C}_{\text{info}} &= \mathbb{E} \left[ \| (\mu_{lin} + \delta) - (\mu_{lin} + \epsilon) \|^2 \right]\\ 
    &= \mathbb{E}[\|\delta\|^2] + \bar{\sigma}^2 D.
\end{align*}

Subtracting the two costs yields the exact margin:
$$\mathcal{C}_{\text{std}} - \mathcal{C}_{\text{info}} = \| \mu_{lin} \|^2 + 2\langle \mu_{lin}, \bar{\delta} \rangle + (1 - \bar{\sigma}^2)D.$$
Considering symmetrically distributed obstacles across the dataset, the expected deformation neutralizes ($\bar{\delta} \to \mathbf{0}$). Since the linear energy $\| \mu_{lin} \|^2 > 0$ and $(1 - \bar{\sigma}^2)D > 0$, the exact margin is strictly positive, proving $\mathcal{C}_{\text{info}} < \mathcal{C}_{\text{std}}$.
\end{proof}

\begin{remark}
    Minimizing $\sigma_{\text{prior}}$ reduces $\mathcal{C}_{\text{info}}$ but risks collapsing the prior into a more deterministic state, stripping the model's multimodal capacity. Thus, $\sigma_{\text{prior}}$ is a critical hyperparameter balancing transport efficiency and generative diversity.
\end{remark}

\subsubsection{SE(2)-Equivariance of the Brownian Bridge Prior}
The $p_{\text{info}}$ inherently exhibits $\mathrm{SE(2)}$-equivariance, seamlessly enabling the task-centric canonicalization. 

\begin{proposition}\label{prior eq}
    For any trajectory sample $\mathbf{x}_0 \sim p_0(\cdot \mid \mathcal{P})$ and transformation $T \in SE(2)$, the transformed sample exactly matches the distribution of the prior conditioned on the transformed task: $T(\mathbf{x}_0) \sim p_0(\cdot \mid T(\mathcal{P}))$.
\end{proposition}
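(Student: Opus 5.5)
The idea is to write $p_0(\cdot\mid\mathcal{P})$ as the pushforward of a standard Gaussian under an affine map, and to show that applying $T$ to that map's output gives the same affine map for the transformed task, up to a rotation of the noise. Fix $T=(R,\mathbf{d})$ with $R\in SO(2)$, acting on a trajectory waypoint-wise by $T(\mathbf{x})_i = R x_i + \mathbf{d}$. The prior depends on $\mathcal{P}$ only through $(s,g)$; the map $\mathcal{M}$ plays no role. So it suffices to track how $s$ and $g$ transform.

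For a sample from $p_0(\cdot\mid\mathcal{P})$, write
$$x_{0,i} = \mu_{lin,i} + \sigma_i\,\varepsilon_i, \qquad \sigma_i = \sigma_{\text{prior}}\sqrt{\gamma_i(1-\gamma_i)}.$$
Following the paper's point-wise definition, I take the $\varepsilon_i \sim \mathcal{N}(0,I_d)$ to be jointly Gaussian with some fixed joint law, e.g.\ i.i.d.\ across waypoints. The argument only requires that this joint law be invariant under applying the same $R$ to every $\varepsilon_i$.

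\textbf{Step 1: the mean transforms correctly.} Because $\gamma_i$ depends only on the index, the interpolation is affine in the endpoints, with weights summing to one:
$$R\mu_{lin,i}+\mathbf{d} = (1-\gamma_i)(Rs+\mathbf{d}) + \gamma_i(Rg+\mathbf{d}) = \tilde\mu_{lin,i}.$$
Here $\tilde\mu_{lin}$ is the interpolation for $T(\mathcal{P})$. The translation passes through precisely because $(1-\gamma_i)+\gamma_i=1$.

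\textbf{Step 2: the noise transforms correctly.} Applying $T$ gives
$$T(\mathbf{x}_0)_i = \tilde\mu_{lin,i} + \sigma_i R\varepsilon_i.$$
The scales $\sigma_i$ are task-independent, since they depend only on $\gamma_i$ and $\sigma_{\text{prior}}$. It therefore remains to show that $(R\varepsilon_1,\dots,R\varepsilon_N)$ has the same law as $(\varepsilon_1,\dots,\varepsilon_N)$. This is rotation invariance of the isotropic Gaussian: $R\varepsilon_i \sim \mathcal{N}(0,RR^\top)=\mathcal{N}(0,I_d)$ since $R$ is orthogonal. Jointly, the block-diagonal matrix $I_N\otimes R$ is orthogonal, so the stacked vector stays $\mathcal{N}(0,I_{Nd})$. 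If instead a single shared $\varepsilon$ is used for all waypoints, as the formula literally reads, the same conclusion follows from $R\varepsilon\overset{d}{=}\varepsilon$.

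\textbf{Step 3: conclude.} $T(\mathbf{x}_0)$ is the same deterministic function of $(\tilde s,\tilde g)$ and of noise with the same law as the one defining $p_0(\cdot\mid T(\mathcal{P}))$. Hence $T(\mathbf{x}_0)\sim p_0(\cdot\mid T(\mathcal{P}))$. Equivalently, the pushforward of the Gaussian density under $T$ has Jacobian determinant $\det R = 1$, which confirms that the densities match.

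The only real subtlety is Step 2, specifically pinning down the joint noise model across waypoints. The point to verify is that it is an isotropic, or at least $R$-invariant, Gaussian in each $\mathbb{R}^d$ block. For $d=3$ the statement should be read with $SE(2)$ acting as rotations about a fixed axis, which are still orthogonal, so the argument goes through unchanged. The endpoint samples, where $\sigma_1=\sigma_N=0$, are deterministic and map exactly to $\tilde s,\tilde g$, consistent with Step 1.
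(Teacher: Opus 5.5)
Your proof is correct and follows the same route as the paper: the interpolation mean commutes with $T$ because it is an affine combination of $s$ and $g$, and $R\varepsilon$ has the same law as $\varepsilon$ by rotation invariance of the isotropic Gaussian. Your version is in fact more careful than the paper's. You check that $R\mu_{lin,i}+\mathbf{d}$ equals the interpolation of the transformed endpoints, whereas the paper only writes the tautology $T(\mu_{lin})=R\mu_{lin}+\mathbf{d}$. You also handle the joint law across waypoints, and you state the conclusion as an equality in distribution rather than the paper's literal $R\varepsilon=\varepsilon$.

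The one thing to drop is the closing Jacobian/density remark. It is not needed, and it does not literally apply: the prior is degenerate (zero variance at $i=1,N$, and rank $d$ if a single shared $\varepsilon$ is used), so it has no Lebesgue density on $\mathbb{R}^{Nd}$.
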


\begin{proof}
    First, the interpolation mean strictly commutes with $T$: $T(\mu_{lin}(\gamma)) = R\mu_{lin}(\gamma) + \mathbf{d}$. Then, applying $T$ to a waypoint $x_0(\gamma)$ sampled from the $p_{\text{info}}$ yields:
    $$T(x_0(\gamma)) = R(\mu_{lin}(\gamma) + \sigma(\gamma)\epsilon) + \mathbf{d} = T(\mu_{lin}(\gamma)) + \sigma(\gamma)R\varepsilon.$$
    Since the base noise $\varepsilon$ is isotropic and $R$ is orthogonal, the rotated noise $R\varepsilon$ retains the covariance $R I_d R^T = I_d$. Thus, it is identical to unrotated noise. Consequently, $T(x_0(\gamma)) = T(\mu_{lin}(\gamma)) + \sigma(\gamma)\varepsilon$, proving strict $\mathrm{SE(2)}$-equivariance.
\end{proof}

\begin{figure*}[htbp]
    \centering 
    \includegraphics[width=0.9\linewidth]{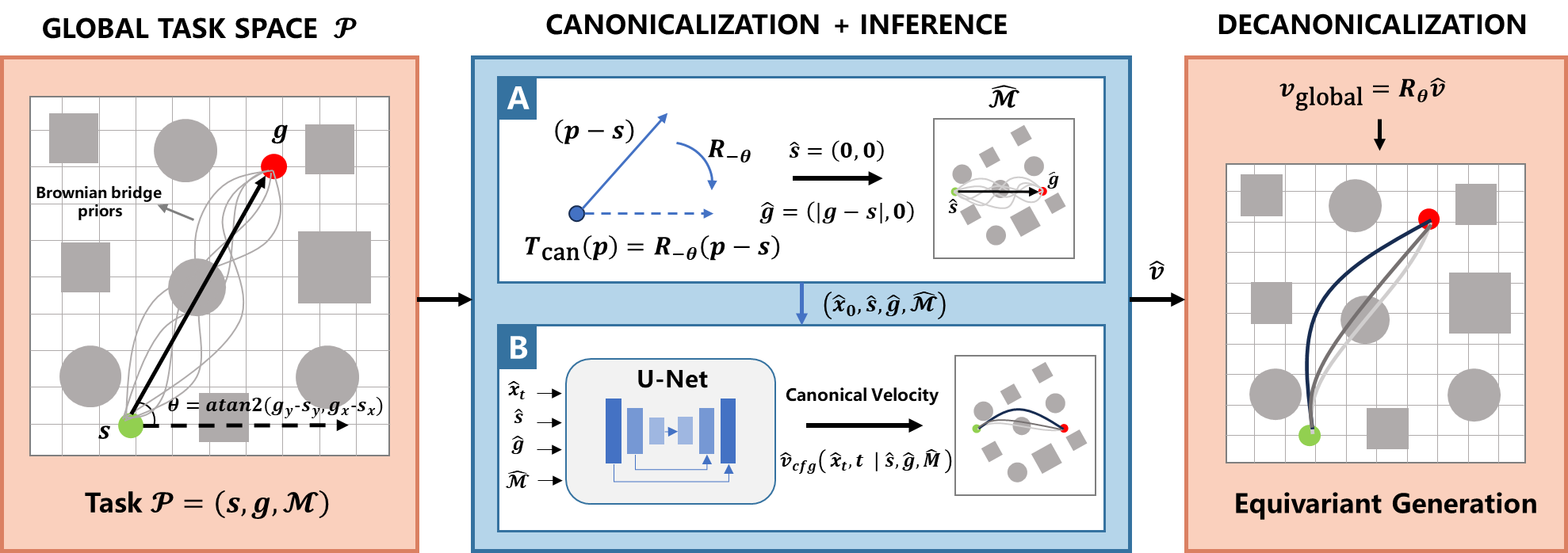}
    \caption{Task-Centric Canonicalization Framework. The system achieves strict $\mathrm{SE(2)}$-equivariance by projecting global tasks into a unified canonical frame. The input is normalized via $T_{can}(p) = R_{-\theta}(p - s)$, anchoring $\hat{s}$ at the origin and aligning $\hat{g}$ with the x-axis. A standard U-Net then infers the canonical vector field $\hat{v}$ conditioned on the aligned map $\widehat{\mathcal{M}}$. The canonical velocity is decanonicalized via $v_{global} = R_\theta \hat{v}$, ensuring generated trajectories transform deterministically with any rigid body transformation of the workspace.}
    \label{fig:SE(2)-Framework}
\end{figure*}

\subsection{Context-Aware Mini-Batch Optimal Transport}
\label{subsec:ot}

Standard mini-batch optimal transport (OT) blindly pairs initial noise $\mathbf{x}_0$ and expert trajectories $\mathbf{x}_1$ across an entire training batch $\mathcal{B}$, fundamentally ignoring the underlying task condition $c = (s, g, \mathcal{M})$ in our problem. Consequently, it inadvertently cross-pairs a prior sampled for one specific context with an expert trajectory belonging to a completely different context. Because the parameterized vector field $v_\theta$ is explicitly conditioned on $c$, such cross-context pairing forces the network to learn physically contradictory mappings. This ill-posed regression target inherently destabilizes training and collapses the learned conditional flow field.

To rectify this, we propose context-aware mini-batch OT (Algorithm \ref{alg:context_ot}). Instead of a global batch assignment, we dynamically partition the mini-batch into context-specific disjoint subsets $\mathcal{B} = \bigcup_{n=1}^N \mathcal{B}_{c_n}$, where each subset $\mathcal{B}_{c_n}$ exclusively contains trajectories sharing the exact same task condition $c_n$. We then solve the linear assignment strictly within each local subset  using the Hungarian algorithm:
$$ \sigma_n^* = \arg\min_{\sigma} \sum_{k=1}^{|\mathcal{B}_{c_n}|} \left\| \mathcal{X}_{0, c_n}^{(k)} - \mathcal{X}_{1, c_n}^{(\sigma(k))} \right\|^2, \quad \forall n \in \{1, \dots, N\} $$

By explicitly confining the OT coupling to trajectories sharing identical boundary conditions and obstacle layouts, we completely eliminate physically impossible cross-context pairings. This formulation guarantees the learning of smooth, task-consistent conditional flow fields, thereby further straightening transport paths and accelerating both training and inference processes.

\begin{algorithm}[htbp]
\caption{Context-Aware Mini-Batch Optimal Transport}
\label{alg:context_ot}
\begin{algorithmic}[1]
\STATE \textbf{Input:} Mini-batch of expert trajectories and task contexts $\mathcal{B} = \{(\mathbf{x}_1^{(i)}, c^{(i)})\}_{i=1}^B$ where $c = (s, g, \mathcal{M})$, trainable vector field policy $v_\theta$
\STATE \textbf{Output:} Context-aligned FM regression loss $\mathcal{L}_{FM}(\theta)$
\STATE Initialize accumulated loss $\mathcal{L}_{FM} \leftarrow 0$
\STATE Extract unique task contexts $\mathcal{C} = \{c_1, \dots, c_N\}$ from $\mathcal{B}$
\FOR{each context $c_n \in \mathcal{C}$}
    \STATE Extract $K_n$ experts: $\mathcal{X}_1^{(n)} \!=\! \{\mathbf{x}_1^{(i)} \mid c^{(i)} \!=\! c_n\}$
    \STATE Sample Brownian Bridge noise: $\mathcal{X}_0^{(n)} \sim p_0(\cdot \mid c_n)$
    \STATE Compute cost matrix: $\mathbf{C}_{i,j} = \|\mathcal{X}_0^{(n)}[i] - \mathcal{X}_1^{(n)}[j]\|^2$
    \STATE Solve assignment: $\sigma^* = \arg\min_{\sigma} \sum_k \mathbf{C}_{k, \sigma(k)}$
    \STATE Sample batched flow times: $\mathbf{t} \sim \mathcal{U}(0, 1)^{K_n}$
    \STATE Interpolate states: $\mathbf{X}_{\mathbf{t}} = (1-\mathbf{t})\mathcal{X}_0^{(n)} + \mathbf{t}\mathcal{X}_1^{(n)}[\sigma^*]$
    \STATE Target vector fields: $\mathbf{U}_{\mathbf{t}} = \mathcal{X}_1^{(n)}[\sigma^*] - \mathcal{X}_0^{(n)}$
    \STATE $\mathcal{L}_{batch} \mathrel{+=} \sum_k \| v_\theta(\mathbf{X}_{\mathbf{t}}[k], \mathbf{t}[k] \mid c_n) - \mathbf{U}_{\mathbf{t}}[k] \|^2$
\ENDFOR
\STATE \textbf{return} $\mathcal{L}_{FM} / B$ \COMMENT{Gradient objective for policy update}
\end{algorithmic}
\end{algorithm}

\subsection{Environment-Aware Classifier-Free Guidance}
\label{subsec:cfg}
To directly embed environment awareness and eliminate the reliance on computationally expensive optimizations, we explicitly integrate the environment occupancy map $\mathcal{M}$ into the Flow Matching framework via Classifier-Free Guidance.

During training, we randomly mask the environment condition $\mathcal{M}$ with a learnable null token $\emptyset$ at a fixed dropout probability $p_{\text{drop}}$. This forces the network to jointly learn an environment-aware vector field, denoted concisely as $v^\mathcal{M}_\theta \triangleq v_\theta(\mathbf{x}_t, t \mid s, g, \mathcal{M})$, and an unconditional field $v^\emptyset_\theta \triangleq v_\theta(\mathbf{x}_t, t \mid s, g, \emptyset)$ that relies solely on task endpoints.

During inference, we extrapolate the flow in the direction of the environmental condition. The final guided vector field $v_{CFG}$ used to update the trajectory ODE is formulated as:
$$v_{CFG}(\mathbf{x}_t, t) = v^\emptyset_\theta + \omega \cdot \left( v^\mathcal{M}_\theta - v^\emptyset_\theta \right),$$
where the guidance scale $\omega \ge 1$ acts as a crucial control lever, seamlessly balancing collision-free performance with the preservation of multimodal generation. 

By distilling environment awareness directly into the learned vector field, this explicit integration completely eliminates the need for iterative, cost-based gradient guidance during sampling, fully preserving the inherent low-latency advantage of Flow Matching.

\subsection{Task-Centric Canonicalization}
\label{subsec:canonicalization}
To achieve $\mathrm{SE(2)}$-equivariance without incurring the computational overhead of specialized network architectures, we propose a lightweight task-centric canonicalization module \ref{fig:SE(2)-Framework} that deterministically transforms all spatial inputs into a standardized canonical frame prior to network inference.

\subsubsection{SE(2)-Equivariance Definition}
Let $T \in SE(2)$ be parameterized by a rotation matrix $R_\alpha \in SO(2)$ and a translation vector $\mathbf{d} \in \mathbb{R}^2$. The group action applies to points as $T(p) = R_\alpha p + \mathbf{d}$, and applies pointwise to a trajectory waypoint sequence $\mathbf{x}$. Occupancy maps transform via the pullback operator $(T(\mathcal{M}))(p) = \mathcal{M}(T^{-1}(p))$. Because continuous vector fields represent time derivatives (velocities) of trajectories, they are strictly translation-invariant and solely rotate: $T(v(\mathbf{x})) = R_\alpha v(\mathbf{x})$. Consequently, our generative policy $v_\theta$ is formally $\mathrm{SE(2)}$-equivariant if, for any valid $T \in SE(2)$, it satisfies:
$$v_\theta(T(\mathbf{x}_t), t \mid T(s), T(g), T(\mathcal{M})) = R_\alpha \, v_\theta(\mathbf{x}_t, t \mid s, g, \mathcal{M}).$$

\subsubsection{Equivariant Framework Construction}
Let $\theta = \text{atan2}(g_y - s_y, g_x - s_x)$ denote the global heading angle. We define a canonicalization operator $T_{can}(p) = R_{-\theta}(p - s)$ that translates the start $s$ to the origin and aligns the start-goal vector with the positive x-axis. The equivariant generation executes a \textit{Canonicalize-Infer-Decanonicalize} pipeline. First, inputs are mapped into the canonical frame: $\hat{\mathbf{x}}_t = T_{can}(\mathbf{x}_t)$ and $\hat{\mathcal{M}}(p) = \mathcal{M}(T_{can}^{-1}(p))$, mapping endpoints deterministically to $\hat{s} = \mathbf{0}$ and $\hat{g} = (\|g-s\|, 0)$. Second, the network predicts the flow entirely within the canonical space: $\hat{v} = v_\theta(\hat{\mathbf{x}}_t, t \mid \hat{s}, \hat{g}, \hat{\mathcal{M}})$. Finally, the output velocities are mapped back to the global frame via the inverse rotation: $v_{global} = R_\theta \hat{v}$.

\begin{theorem}[$\mathrm{SE(2)}$-Equivariance]
    The proposed canonicalization framework guarantees strict $\mathrm{SE(2)}$-equivariance.
\end{theorem}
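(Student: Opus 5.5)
The plan is to show that the canonical-frame inputs are \emph{invariant} under any $T=(R_\alpha,\mathbf{d})\in SE(2)$, while the decanonicalization rotation picks up exactly $R_\alpha$. Write the full pipeline as
$$v_{global}(\mathbf{x}_t,t\mid s,g,\mathcal{M}) = R_{\theta}\, v_\theta\big(T_{can}(\mathbf{x}_t), t \mid \mathbf{0}, (\|g-s\|,0), \mathcal{M}\circ T_{can}^{-1}\big),$$
where $\theta$ and $T_{can}$ depend on $(s,g)$. The goal is then the identity in the $\mathrm{SE(2)}$-Equivariance Definition:
$$v_{global}(T(\mathbf{x}_t),t\mid T(s),T(g),T(\mathcal{M})) = R_\alpha v_{global}(\mathbf{x}_t,t\mid s,g,\mathcal{M}).$$

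First, I track how the heading and canonicalization operator change. For the transformed task, $\tilde s = R_\alpha s+\mathbf{d}$ and $\tilde g = R_\alpha g+\mathbf{d}$. Hence $\tilde g-\tilde s = R_\alpha(g-s)$, which gives:
\begin{itemize}
\item the norm is preserved, $\|\tilde g-\tilde s\|=\|g-s\|$, so the canonical goal $\hat g$ is unchanged and $\hat s=\mathbf{0}$ trivially;
\item the heading shifts, $\tilde\theta=\theta+\alpha$ modulo $2\pi$, so that $R_{\tilde\theta}=R_\alpha R_\theta$, using commutativity of $SO(2)$ and the fact that rotation matrices are $2\pi$-periodic in the angle.
\end{itemize}
It follows that $\tilde T_{can}(p)=R_{-\theta}R_{-\alpha}(p-R_\alpha s-\mathbf{d})$, and therefore $\tilde T_{can}\circ T = T_{can}$. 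Applied pointwise, this gives $\tilde T_{can}(T(\mathbf{x}_t))=T_{can}(\mathbf{x}_t)$.

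Second, I check the map. Using the pullback rule, $\widehat{T(\mathcal M)}(p)=(T(\mathcal M))(\tilde T_{can}^{-1}(p))=\mathcal M(T^{-1}\tilde T_{can}^{-1}(p))$. Since $\tilde T_{can}\circ T=T_{can}$, we have $T^{-1}\circ\tilde T_{can}^{-1}=T_{can}^{-1}$. This gives $\widehat{T(\mathcal M)}=\hat{\mathcal M}$.

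So every argument of the network is identical for the original and transformed tasks, and the canonical output $\hat v$ is the same in both cases. Decanonicalizing then gives $R_{\tilde\theta}\hat v = R_\alpha R_\theta \hat v = R_\alpha v_{global}$, which is the required identity. To extend this to the full generative policy, I integrate the ODE. If $\mathbf{x}_t$ solves the flow for $\mathcal P$, then $T(\mathbf{x}_t)$ has derivative $R_\alpha\dot{\mathbf{x}}_t$, which equals the field evaluated at the transformed state. By Proposition~\ref{prior eq} the initial condition is also correctly distributed, so by uniqueness of ODE solutions $\pi(T(\mathcal P))\overset{d}{=}T(\pi(\mathcal P))$. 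The same invariance holds for the unconditional branch with the null token, since $\emptyset$ is untouched by $T$; hence $v_{CFG}$, a linear combination of the two branches, is equivariant too.

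The main obstacle is the degenerate case $s=g$, where $\mathrm{atan2}$ and hence $\theta$ are undefined. I would state the theorem for $s\neq g$, which is the only meaningful planning case, or fix any convention there and note that equivariance then holds only up to that convention. A secondary subtlety is angle wrap-around in $\tilde\theta=\theta+\alpha$, which is harmless because only $R_{\tilde\theta}$ enters the pipeline. Exactness on rasterized maps is a further caveat: the continuous pullback argument is exact, while grid resampling introduces only discretization error.
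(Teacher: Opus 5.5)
Your proposal is correct and follows the same route as the paper's proof. Both arguments use $\tilde\theta=\theta+\alpha$ to show $\tilde T_{can}\circ T = T_{can}$, so the canonical inputs are invariant, and then decanonicalization with $R_{\tilde\theta}=R_\alpha R_\theta$ contributes exactly the factor $R_\alpha$; your explicit pullback computation for the map, the check that $\hat g$ is unchanged, the ODE/CFG extension, and the $s\neq g$ and rasterization caveats are sound refinements of points the paper leaves implicit.
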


\begin{proof}
    Consider a planning task $\mathcal{P} = (s, g, \mathcal{M})$ and an arbitrary rigid transformation $T_{world} = (R_\alpha, \mathbf{d})$. The transformed task is denoted as $\tilde{\mathcal{P}} = T_{world}(\mathcal{P}) = (\tilde{s}, \tilde{g}, \tilde{\mathcal{M}})$. As established in Proposition \ref{prior eq}, the initial noise sampled for the transformed task corresponds exactly to the transformed noise of the original task: $\tilde{\mathbf{x}}_0 = T_{world}(\mathbf{x}_0)$. 

    Because the start-goal vector rotates by $\alpha$, the updated global heading angle is $\tilde{\theta} = \theta + \alpha$. The canonical transformation operator for the new task becomes $\tilde{T}_{can}(p) = R_{-\tilde{\theta}}(p - \tilde{s})$. Applying this operator to the transformed trajectory yields:
    $$ \begin{aligned} \tilde{T}_{can}(\tilde{\mathbf{x}}_t) &= R_{-(\theta + \alpha)}\left( (R_\alpha \mathbf{x}_t + \mathbf{d}) - (R_\alpha s + \mathbf{d}) \right)\\ &= R_{-\theta} R_{-\alpha} R_\alpha (\mathbf{x}_t - s) = \hat{\mathbf{x}}_t. \end{aligned}$$
    This rigorous cancellation demonstrates that the canonical trajectory remains absolutely invariant regardless of global shifts. By identical logic, applying $\tilde{T}_{can}$ to the transformed map recovers the original canonical map: $\tilde{T}_{can}(\tilde{\mathcal{M}}) = \hat{\mathcal{M}}$.

    Given mathematically identical canonical inputs $(\hat{\mathbf{x}}_t, t \mid \hat{s}, \hat{g}, \hat{\mathcal{M}})$, the neural network infers the exact same canonical vector field: $\hat{v}$. Finally, the decanonicalization step computes the global vector field for the transformed task:
    $$\tilde{v}_{global} = R_{\tilde{\theta}}\hat{v} = R_{\theta + \alpha}\hat{v} = R_\alpha (R_\theta \hat{v}) = R_\alpha v_{global}.$$
    This exact match with the vector pushforward rule $\tilde{v}_{global} = T_{world}(v_{global})$ definitively proves that the framework is strictly $\mathrm{SE(2)}$-equivariant. 
\end{proof}

\section{Experiments}
\label{Sec.4}
The overarching goal of our empirical evaluation is to validate the efficiency, generalization capability, and geometric robustness of the proposed BridgeFlow framework. Specifically, our experiments are designed to answer the following core Research Questions (RQs):
\begin{itemize}
    \item \textbf{RQ1 (Efficiency \& Quality):} Does the Brownian bridge informative prior and context-aware mini-batch OT significantly accelerate the inference speed of Flow Matching while maintaining trajectory quality?
    \item \textbf{RQ2 (Environmental Generalization):} Does the environment-aware Classifier-Free Guidance (CFG) enable the model to successfully adapt to novel, unseen obstacle configurations without heavy reliance on inference-time optimization?
    \item \textbf{RQ3 (Geometric Robustness):} Does the task-centric canonicalization effectively guarantee $\mathrm{SE(2)}$-equivariance, enabling generalization to spatially transformed tasks without augmented training data?
\end{itemize}

\begin{figure}[htbp]
    \centering
    \begin{subfigure}[b]{0.22\textwidth}
        \centering
        \includegraphics[width=\textwidth]{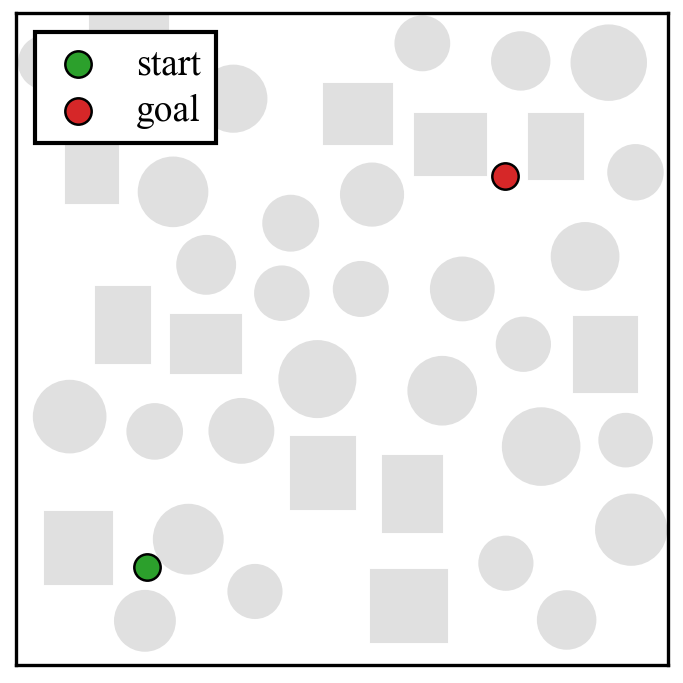}
        \caption{PointMass2D Dense}
        \label{subfig:2d_scene}
    \end{subfigure}
    \hfill
    \begin{subfigure}[b]{0.22\textwidth}
        \centering
        \includegraphics[width=\textwidth]{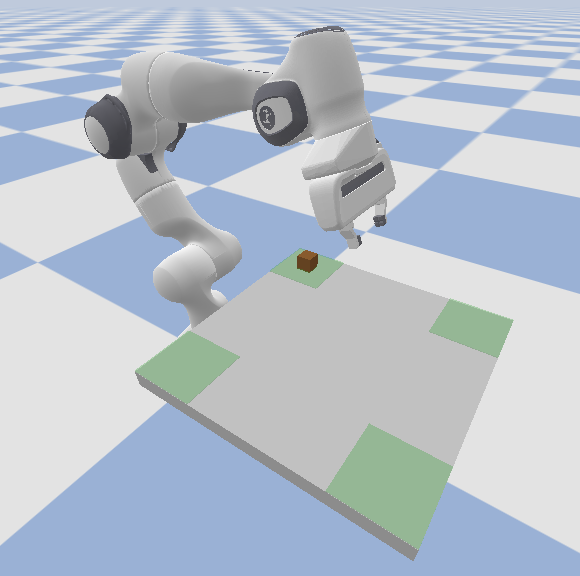}
        \caption{7-DoF Franka Panda}
        \label{subfig:pybullet_scene}
    \end{subfigure}
    \caption{Experimental setups for evaluating generative motion planning. (a) An example of the 2D planar navigation environment heavily cluttered with obstacles. (b) The 3D manipulation workspace in PyBullet. The symmetric square tabletop features four distinct target regions (A, B, C, and D) marked in green at its corners, specifically designed for evaluating $\mathrm{SE(2)}$ spatial generalization.}
    \label{fig:environments}
\end{figure}

\subsection{Experimental Setup}

\subsubsection{Environments and Data Collection}
To comprehensively evaluate our method, we conduct experiments across two distinct domains. We utilize an RRT-Connect planner followed by B-spline smoothing to generate collision-free, kinematically smooth expert trajectories for training.

\textbf{PointMass2D Dense:} In this foundational planar navigation benchmark, a point mass must find a smooth path through densely cluttered obstacles. We generated 200 distinct environments, explicitly partitioned into 150 for training and 50 strictly unseen environments for testing environmental generalization. For each environment, we sampled 500 valid start-goal pairs with 20 feasible multimodal trajectories per pair. By leveraging the time-reversibility of kinematic paths (swapping starts and goals), we cost-effectively augmented the dataset to 1,000 directed tasks per environment without incurring additional simulation overhead.

\textbf{7-DoF Franka Emika Panda:} To evaluate $\mathrm{SE(2)}$-equivariance in high-dimensional 3D manipulation, we simulate a robotic arm performing pick-and-place operations. The symmetric tabletop workspace features four distinct planar regions (A, B, C, and D) located at the corners of a square. We deliberately restrict the training distribution to contain \textit{only} trajectories originating from region A (i.e., $A \to B, C, D$). During testing, the model executes paths between unseen regions (e.g., $B \to C$, $D \to B$). Because these testing tasks represent pure $\mathrm{SE(2)}$ rigid body transformations of the training distribution, this setup perfectly isolates the evaluation of geometric generalization.

\subsubsection{Baselines}
We compare BridgeFlow against two state-of-the-art generative planning frameworks. \textbf{MPD (Motion Planning Diffusion)} is a representative diffusion-based planner that learns strong trajectory priors but suffers from iterative denoising latency and lacks explicit environmental guidance during training. \textbf{FlowMP} replaces the diffusion backbone with standard Flow Matching to accelerate inference, yet it inherits a naive Gaussian prior.

\subsubsection{Evaluation Metrics}

 To fully leverage the parallel computing capabilities of GPUs, we sample $K=40$ candidate trajectories for each testing context. Success Rate (SR) is the percentage of planning contexts where at least one generated trajectory successfully reaches the goal without collisions. Valid Rate (VR) is the proportion of completely valid paths among all candidates and serves as a proxy for the quality of the generative prior. Collision Intensity is the ratio of waypoints in collision across the entire set of candidates to quantify environmental awareness. Average Length and Diversity represent the mean path length and the spatial variance among successful trajectories, respectively. Inference Time is the average computational latency required to generate the full candidate set.

\begin{figure*}[htbp]
    \centering 
    \includegraphics[width=0.9\linewidth]{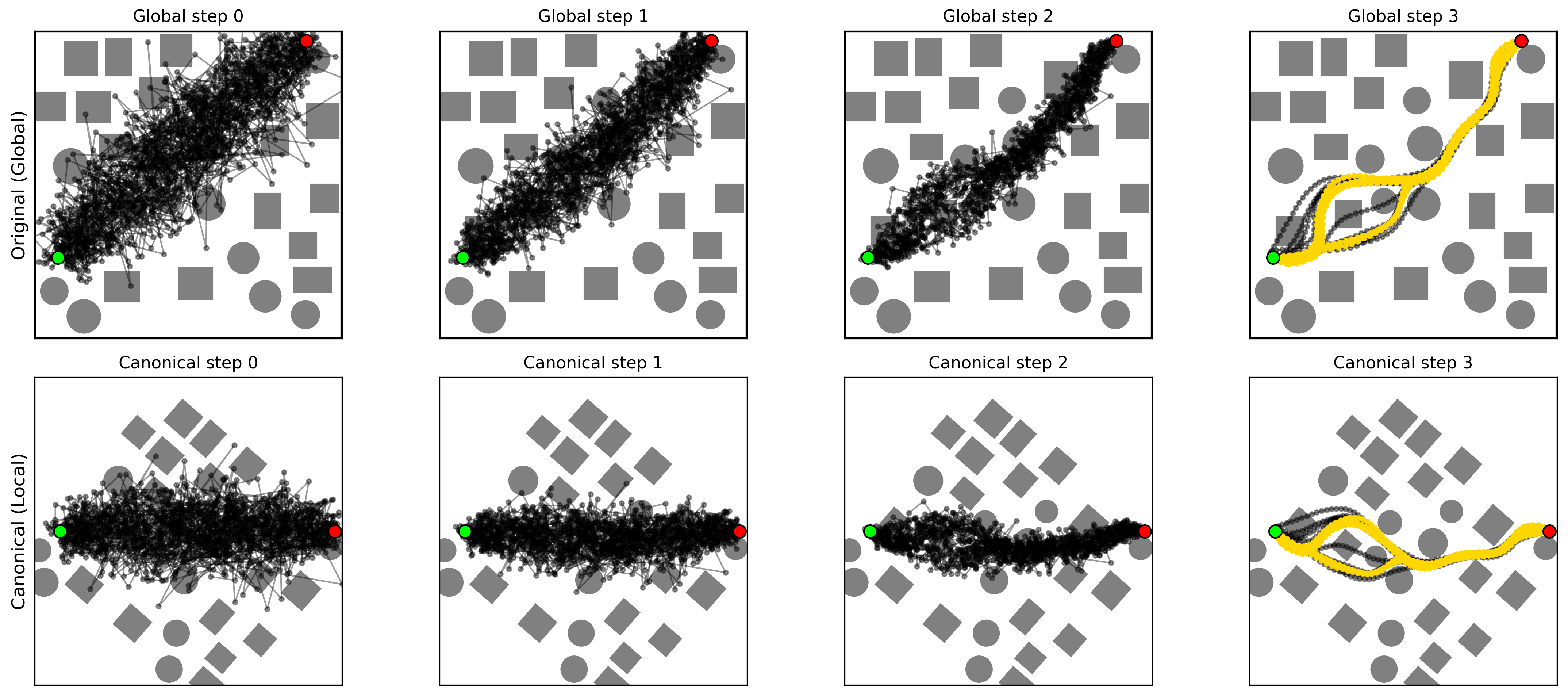}
    \caption{Visualization of the 3-step Flow Matching generative process in the PointMass2D Dense environment. The top row illustrates the trajectory evolution in the original, uncanonicalized global workspace. The bottom row demonstrates the exact same generation process within our task-centric canonicalization frame. Yellow curves represent feasible, collision-free trajectories, while black curves indicate infeasible paths that collide with obstacles.}
    \label{fig:2d_experiment}
\end{figure*}

\begin{table*}[t]
\centering
\caption{Comprehensive Ablation Study and Baseline Comparison in PointMass2D Dense Environment}
\label{tab:pointmass_ablation}
\resizebox{\textwidth}{!}{%
\begin{tabular}{lc | cccccc | cccccc}
\toprule
\multirow{2}{*}{\textbf{Method}} & \multirow{2}{*}{\textbf{Steps}} & \multicolumn{6}{c|}{\textbf{Training Distribution (Dataset)}} & \multicolumn{6}{c}{\textbf{Unseen Distribution (Random Context)}} \\
\cmidrule(lr){3-8} \cmidrule(lr){9-14}
& & SR (\%) & VR (\%) & Intensity $\downarrow$ & Avg Length $\downarrow$ & Diversity $\uparrow$ & Time (s) $\downarrow$ & SR (\%) & VR (\%) & Intensity $\downarrow$ & Avg Length $\downarrow$ & Diversity $\uparrow$ & Time (s) $\downarrow$ \\
\midrule
MPD & 5 (DDIM) & 98 & 39.80 & 10.87 & 6.6820 & 24.6527 & 0.0410 & 71 & 17.95 & 15.28 & 7.6305 & 27.1953 & 0.0411 \\
MPD & 10 (DDIM) & 98 & 42.95 & 9.66 & 6.6311 & 30.1687 & 0.0769 & 81 & 19.55 & 14.68 & 7.8246 & 33.8155 & 0.0765 \\
MPD & 15 (DDIM) & 98 & 42.48 & 9.67 & \textbf{6.6226} & 32.8527 & 0.1128 & 83 & 19.05 & 14.63 & 7.9008 & 36.9619 & 0.1116 \\
\midrule
MPD Prior & 5 (DDIM) & 98 & 39.45 & 10.94 & 6.6996 & 24.6637 & 0.0141 & 71 & 17.48 & 16.15 & 7.7226 & 27.3323 & 0.0144 \\
MPD Prior & 10 (DDIM) & 98 & 42.70 & 9.73 & 6.6430 & 30.1888 & 0.0255 & 79 & 18.27 & 15.51 & 7.7856 & 34.0778 & 0.0256 \\
MPD Prior & 15 (DDIM) & 98 & 42.25 & 9.74 & 6.6345 & \textbf{32.8816} & 0.0382 & 78 & 18.20 & 15.43 & 7.9227 & \textbf{37.3093} & 0.0380 \\
\midrule
FlowMP & 2 & 96 & 56.75 & 6.26 & 6.7602 & 4.7424 & 0.0072 & 66 & 27.60 & 11.96 & 7.7251 & 4.9368 & 0.0075 \\
FlowMP & 3 & 99 & 61.05 & 5.04 & 6.7980 & 9.2915 & 0.0098 & 74 & 29.60 & 10.88 & 7.8051 & 9.6412 & 0.0096 \\
FlowMP & 4 & 100 & 61.55 & 4.65 & 6.8545 & 13.2270 & 0.0123 & 81 & 29.88 & 10.47 & 7.8658 & 13.8280 & 0.0120 \\
FlowMP & 5 & 100 & 61.15 & 4.56 & 6.8496 & 16.3040 & 0.0138 & 80 & 29.93 & 10.17 & 7.8317 & 17.0199 & 0.0140 \\
\midrule
FlowMP + OT & 2 & 99 & 53.62 & 6.28 & 6.8530 & 17.4525 & 0.0075 & 79 & 26.80 & 12.15 & 7.9296 & 17.7293 & 0.0076 \\
FlowMP + OT & 3 & 100 & 57.40 & 5.24 & 6.9228 & 21.0322 & 0.0097 & 80 & 28.98 & 11.17 & 8.0438 & 21.4398 & 0.0099 \\
FlowMP + OT & 4 & 100 & 58.17 & 4.90 & 6.9196 & 23.4120 & 0.0120 & 80 & 29.77 & 10.78 & 7.9794 & 24.0079 & 0.0120 \\
FlowMP + OT & 5 & 100 & 58.38 & 4.79 & 6.9301 & 25.0982 & 0.0139 & 84 & 29.88 & 10.61 & 8.0604 & 25.8435 & 0.0149 \\
\midrule
FlowMP + BB Prior & 2 & 98 & 58.40 & 5.95 & 6.6487 & 4.6632 & 0.0074 & 67 & 26.23 & 11.65 & 7.6385 & 4.7095 & \textbf{0.0072} \\
FlowMP + BB Prior & 3 & 100 & 64.65 & 4.42 & 6.6821 & 8.9906 & 0.0100 & 78 & 28.55 & 10.64 & 7.6823 & 9.1608 & 0.0096 \\
FlowMP + BB Prior & 4 & 100 & 66.53 & 3.93 & 6.6900 & 12.7877 & 0.0115 & 81 & 29.45 & 10.36 & 7.6567 & 13.0263 & 0.0119 \\
FlowMP + BB Prior & 5 & 100 & 66.12 & 3.70 & 6.7051 & 15.9542 & 0.0139 & 82 & 29.57 & 10.22 & \textbf{7.6132} & 16.2149 & 0.0140 \\
\midrule
\textbf{BridgeFlow (Ours)} & 2 & 100 & 70.25 & 4.07 & 6.6617 & 8.6599 & \textbf{0.0071} & 82 & 32.95 & 10.16 & 7.6678 & 8.1654 & 0.0075 \\
\textbf{BridgeFlow (Ours)} & 3 & 100 & 77.62 & 2.43 & 6.6903 & 13.0911 & 0.0096 & 88 & 37.05 & 8.38 & 7.8840 & 12.7498 & 0.0094 \\
\textbf{BridgeFlow (Ours)} & 4 & 100 & 80.03 & 1.89 & 6.7442 & 16.6112 & 0.0117 & \textbf{90} & 38.75 & 7.75 & 7.9164 & 16.2652 & 0.0117 \\
\textbf{BridgeFlow (Ours)} & 5 & \textbf{100} & \textbf{80.95} & \textbf{1.67} & 6.7762 & 19.5054 & 0.0133 & \textbf{90} & \textbf{39.40} & \textbf{7.50} & 7.9624 & 19.0508 & 0.0140 \\
\bottomrule
\end{tabular}%
}
\end{table*}

\subsection{Results in PointMass2D Dense Environment}
We systematically evaluate our framework in the PointMass2D dense environment to answer three core Research Questions (RQs) regarding generative quality, environmental generalization, and geometric robustness. Furthermore, we detail a critical architectural insight regarding boundary satisfaction. Quantitative results are summarized in Table \ref{tab:pointmass_ablation} and Table \ref{tab:se2_2d}.

\subsubsection{RQ1: Generative Quality and Inference Efficiency}
Table \ref{tab:pointmass_ablation} demonstrates that BridgeFlow drastically outperforms standard diffusion (MPD) and naive Flow Matching (FlowMP) in both safety and speed. Operating at 5 inference steps, BridgeFlow achieves an exceptional Valid Rate (VR) of $80.95\%$ and a minimal Collision Intensity of $1.67$, effectively doubling the safety of MPD (VR $42.48\%$, Intensity $9.67$). Remarkably, BridgeFlow requires only \textbf{2 steps} ($0.0071$s) to reach a $100\%$ SR and $70.25\%$ VR. To achieve comparable performance, MPD requires 15 DDIM steps ($0.1128$s), highlighting a remarkable \textbf{$15\times$ inference speedup}. This performance stems from the synergy between the Brownian bridge prior and context-aware mini-batch OT. Ablation studies reveal that standard OT alone degrades VR to $58.38\%$ due to cross-context matching conflicts. However, when combined with the Brownian bridge prior, VR surges to $80.95\%$, proving that this synergy reduces transport cost and straightens the flow field.

\subsubsection{RQ2: Generalization to Unseen Contexts}
To test environmental generalization, we evaluated all models on Random Contexts featuring completely novel obstacle layouts and unseen start-goal pairs. For a fair comparison, we integrated our environment-aware CFG mechanism into all baseline models. Despite the distribution shift, BridgeFlow demonstrates vastly superior robustness, maintaining a VR of $39.4\%$ and an Intensity of $7.5$. In contrast, the CFG-enhanced MPD collapses to a VR of $19.05\%$ and an Intensity of $14.63$. This substantial margin reveals a critical insight: environmental guidance via CFG is most effective when applied to the structured, straightened vector fields provided by our framework.

\begin{table*}[htbp]
\centering
\caption{Zero-Shot Geometric Generalization on 7-DoF Franka Manipulation Task}
\label{tab:franka_se2}
\resizebox{\textwidth}{!}{%
\begin{tabular}{l | cccc | cccc}
\toprule
\multirow{2}{*}{\textbf{Method}} & \multicolumn{4}{c|}{\textbf{In-Distribution ($A \to B/C/D$)}} & \multicolumn{4}{c}{\textbf{Unseen SE(2) Transfers (other regions)}} \\
\cmidrule(lr){2-5} \cmidrule(lr){6-9}
& VR (\%) $\uparrow$ & Avg Length $\downarrow$ & Diversity $\uparrow$ & Time (s) $\downarrow$ & VR (\%) $\uparrow$ & Avg Length $\downarrow$ & Diversity $\uparrow$ & Time (s) $\downarrow$ \\
\midrule
MPD (15 Step) & 43.0 & 4.82 & 15.21 & 0.152 & 19.0 & 5.10 & 18.54 & 0.155 \\
FlowMP (5 Step) & 65.0 & 4.51 & 8.52 & 0.038 & 21.0 & 4.95 & 9.15 & 0.039 \\
\textbf{BridgeFlow (5 Step)} & \textbf{88.0} & 4.23 & 12.45 & 0.045 & \textbf{80.0} & 4.30 & 11.82 & 0.046 \\
\bottomrule
\end{tabular}%
}
\end{table*}

\subsubsection{RQ3: Geometric Robustness and SE(2)-Equivariance}
To evaluate geometric robustness, we subjected the models to continuous spatial transformations ($T \in SE(2)$) on the original training tasks (Table \ref{tab:se2_2d}). Baselines lacking structural equivariance (MPD, FlowMP) suffer catastrophic failures: SRs plummet to $40\%\text{--}52\%$, VRs drop below $6\%$, and Collision Intensities surge to nearly $25$. This confirms that standard generative planners merely memorize absolute coordinate distributions, entirely losing spatial reasoning. In contrast, BridgeFlow maintains commanding generative performance (SR $96\%$, VR $74.85\%$, Intensity $5.28$), proving that our task-centric canonicalization module successfully factors out global symmetries for zero-shot generalization.


\begin{table}[htbp]
\centering
\caption{Evaluation on Random SE(2) Transformed Tasks}
\label{tab:se2_2d}
\resizebox{\columnwidth}{!}{%
\begin{tabular}{lcccccc}
\toprule
\textbf{Method} & \textbf{SR (\%)} & \textbf{VR (\%)} & \textbf{Int. $\downarrow$} & \textbf{Len. $\downarrow$} & \textbf{Div. $\uparrow$} & \textbf{Time (s)} \\
\midrule
MPD (5 Step)  & 44 & 5.75 & 23.75 & 6.3229 & 24.1820 & 0.0296 \\
MPD (10 Step) & 50 & 5.75 & 23.56 & 6.0229 & 28.3098 & 0.0592 \\
MPD (15 Step) & 52 & 5.65 & 23.55 & 6.0239 & 31.3268 & 0.0880 \\
\midrule
FlowMP (2 Step) & 20 & 6.20 & 25.13 & 5.9499 & 2.7843 & 0.0117 \\
FlowMP (3 Step) & 32 & 5.30 & 25.15 & 6.1966 & 6.6363 & 0.0176 \\
FlowMP (4 Step) & 30 & 4.65 & 25.16 & 6.1787 & 10.1282 & 0.0232 \\
FlowMP (5 Step) & 40 & 4.65 & 24.98 & 5.8222 & 13.0033 & 0.0289 \\
\midrule
BridgeFlow (2 Step) & 88 & 50.65 & 5.80 & 6.5569 & 2.9351 & 0.0129 \\
BridgeFlow (3 Step) & 94 & 54.05 & 5.60 & 6.7047 & 6.6146 & 0.0185 \\
BridgeFlow (4 Step) & 96 & 63.45 & 5.54 & 6.8044 & 10.4302 & 0.0244 \\
\textbf{BridgeFlow (5 Step)} & \textbf{96} & \textbf{74.85} & \textbf{5.28} & 6.8583 & 13.1781 & 0.0302 \\
\bottomrule
\end{tabular}%
}
\end{table}

\subsubsection{Insight: Natural Boundary Satisfaction without In-Painting}
Beyond quantitative metrics, we observed a architectural advantage: Unlike baselines requiring heuristic ``trajectory in-painting'' (overwriting endpoints at each step), BridgeFlow naturally satisfies boundary conditions. Because our Brownian bridge prior analytically enforces zero variance at the boundaries, throughout training, the neural network naturally learns that the vector field at these fixed endpoints must be exactly zero ($v_\theta = 0$). This enables the ODE solver to integrate trajectories autonomously, preserving mathematical continuity without manual intervention.

\subsection{Results on 7-DoF Franka Manipulation}
To validate the scalability and geometric robustness of our framework within high-dimensional, complex robotic systems, we evaluate the models on a 7-DoF Franka Emika Panda manipulation task.  The experimental workspace is divided into four symmetric tabletop quadrants (A, B, C, and D). Crucially, the training dataset exclusively comprises trajectories originating from quadrant A (e.g., $A \to B, C, D$). During evaluation, we challenge the models with Out-of-Distribution (OOD) tasks originating from the other quadrants (e.g., $B \to C, D \to B$). These OOD tasks represent strict $\mathrm{SE(2)}$ spatial transformations of the training distribution. Quantitative results are detailed in Table \ref{tab:franka_se2}.

\subsubsection{RQ1: High-Dimensional Generative Performance and Efficiency}
Even within the in-distribution tasks, BridgeFlow establishes a commanding advantage. In the complex 7-DoF joint space, it achieves an exceptional Valid Rate (VR) of $88.0\%$, significantly outperforming both the diffusion-based MPD ($43.0\%$) and the naive FlowMP baseline ($65.0\%$). Furthermore, BridgeFlow consistently generates the most kinematically efficient paths, demonstrating that our context-aware mini-batch OT successfully straightens the probability flow field even when scaled to high-dimensional continuous state spaces. Notably, while our task-centric canonicalization module introduces a mathematically negligible coordinate transformation overhead ($0.045$s inference time vs. FlowMP's $0.038$s), the framework easily satisfies real-time control requirements and remains over $3\times$ faster than MPD.

\subsubsection{RQ3: Geometric Robustness and SE(2) Generalization}
On unseen $\mathrm{SE(2)}$ transfer tasks,baseline models suffer a catastrophic performance collapse under spatial shifts. MPD's VR decreases from $43.0\%$ to a mere $19.0\%$, while FlowMP crashes from $65.0\%$ to $21.0\%$. This confirms that standard architectures overfit to absolute coordinates, rendering them brittle under spatial shifts. In contrast, BridgeFlow exhibits remarkable geometric robustness, maintaining a VR of $80.0\%$ on these completely unseen tasks, which proves the efficacy of our canonicalization framework. By systematically factoring out global spatial symmetries prior to network inference, BridgeFlow achieves robust zero-shot generalization, obviating the need for expensive data augmentation in high-dimensional learning.

\section{Conclusion}
\label{Sec.5}
In this paper, we propose BridgeFlow, a novel $\mathrm{SE(2)}$-equivariant motion planning framework based on Flow Matching that resolves the bottlenecks of existing learning-based planners. By synergistically integrating a Brownian bridge informative prior, environment-aware Classifier-Free Guidance, and context-aware mini-batch optimal transport, our framework achieves exceptionally fast inference while establishing a highly informative geometric prior. Furthermore, a lightweight task-centric canonicalization module guarantees strict $\mathrm{SE(2)}$-equivariance, enabling robust generalization across continuous rigid body transformations.

Future work will focus on three directions: (1) Extending the framework to 3D motion planning by adapting the occupancy map to 3D distance fields and extending $\mathrm{SE(2)}$-equivariance to $\mathrm{SE(3)}$-equivariance. (2) Optimizing the noise variance \(\sigma_{\text{prior}}\) and guidance weight \(\omega\) using adaptive methods to further improve trajectory quality and inference speed. (3) Validating the framework on real robotic systems to demonstrate its practical applicability in real-time scenarios.

\addtolength{\textheight}{-12cm}   









\end{document}